\documentclass[10pt]{article}
\usepackage{fullpage}
\usepackage{times,amsmath,amsfonts,amssymb,url,color,graphicx}


\usepackage{algorithm}
\usepackage{algorithmic}
\algsetup{indent=2em}

\usepackage{etaremune}

\usepackage{tikz}
\usetikzlibrary{positioning}

\newcommand{\secref}[1]{Section~\ref{#1}}

\newtheorem{theorem}{Theorem}
\newtheorem{remark}{Remark}
\newtheorem{definition}{Definition}

\newtheorem{assumption}{Assumption}
\newcommand{\BlackBox}{\rule{1.5ex}{1.5ex}}  
\newenvironment{proof}{\par\noindent{\bf Proof\ }}{\hfill\BlackBox\\[2mm]}

\newcommand{\defref}[1]{Definition~\ref{#1}}

\renewcommand{\eqref}[1]{Equation~\ref{#1}}

\DeclareMathOperator*{\E}{\mathbb{E}}
\DeclareMathOperator*{\prob}{\mathbb{P}}

\DeclareMathOperator*{\argmin}{argmin} 
\DeclareMathOperator*{\argmax}{argmax}

\renewcommand{\H}{\mathcal{H}}

\tikzstyle{mylayer} = [draw=red, fill=blue!20, very thick,
    rectangle, rounded corners, inner sep=5pt, inner ysep=5pt]

\begin{document}


\title{On the Ethics of Building AI in a Responsible Manner}

\author{Shai Shalev-Shwartz, Shaked Shammah, Amnon Shashua}

\maketitle

\begin{abstract}
The AI-alignment problem arises when there is a discrepancy between the goals that a human designer specifies to an AI learner and a potential catastrophic outcome that does not reflect what the human designer really wants. We argue that a formalism of AI alignment that does not distinguish between strategic and agnostic misalignments is not useful, as it deems all technology as un-safe. We propose a definition of a strategic-AI-alignment and prove that most machine learning algorithms that are being used in practice today do not suffer from the strategic-AI-alignment problem. However, without being careful, today's technology might lead to strategic misalignment. 
\end{abstract}

\section{Introduction}

Recently, public figures have been advocating more research and regulatory oversight into the dangers of AI deployed in real-world applications \cite{YvalHarrari}. The success of computer vision, natural language processing and understanding and the ability to extract patterns from massive amounts of data, does raise important questions about how the rise of automation (in the form of compute and algorithms) will affect society, how can the public ``reason with'' algorithms who can make decisions that affect our lives, how would the masses of data collected on users of digital services would be used and what kind of malicious abuse can take place and how to avoid it \cite{YvalHarrari}. 

As much as those issues require immediate and focused attention, there is a bigger potential danger at hand of a technology whose ultimate evolutionary end-point could get out of hand and cause havoc on an epic scale. Putting aside popular discourse on the potential dangers posed by ``super intelligent'' machines, the academic community has been pondering about AI {\it Safety\/} for nearly two decades under what is called the ``AI-alignment'' problem \cite{EliezerYudkowskyTalk, yudkowsky2001creating, taylor2016alignment} and the \emph{paperclip maximization problem}~\cite{benson2016formalizing}.  In a nutshell, the AI-alignment problem refers to the ability of a super-learner to maximize a reward function of an agent interacting with the environment and while doing so finding solutions that the (human) designer of the reward function did not anticipate, thereby leading to a catastrophe. The classical thought experiment, inspired by the movie \emph{Fantasia}, involves the problem of assigning a reward function to a robot whose goal is to fill a cauldron. The seemingly innocent reward which assigns a value of $1$ if the cauldron is full and $0$ otherwise may cause the robot to flood the entire workplace in order to maximize the probability of getting the positive reward. A straightforward suggestion is that the robot will be equipped with a ``stop'' button, in order to prevent unexpected side effects. However, researchers have shown that no matter how you go about it, faced with a very advanced intelligence, the ``stop'' button will be useless since the robotic agent will be able to manipulate the human operator from pressing it. As far as the field of AI-alignment goes, finding an alignment between the full essence of human desires and their expression as a reward function to be optimized by a super-advanced robotic agent, is an open problem.

A natural question that arises from the intractability of the AI-alignment problem is \emph{``why aren't we scared?''} The community of AI practitioners are either oblivious to the AI-alignment problem or do not seem to care, whereas regulatory bodies do not have the tools to make concrete statements to protect society while not stifling technological progress which carries great promise to society. 

There are a number of reasons the AI-alignment problem has not taken root in the minds of practitioners. First, it supposes an advanced form of machine intelligence that no one can predict when and if will be attained. However, as we show later, AI deployed in the real world can lead to imminent dangers to society {\it using today's technology\/}, and as a result some form of a ``digital analogue of the Declaration of Helsinki'' must be established.
The second reason practitioners by and large ignore the AI-alignment problem is that the AI-alignment literature conflates two types of processes that can lead to a catastrophe --- one being ``strategic'', where the optimization of the reward function \emph{intentionally} changes the distribution of the world, while the other is ``agnostic'', non-intentional, distribution drift due to a ``butterfly effect'' that might follow from the chaotic nature of non-linear dynamics of complex systems. As we show later, when those two are conflated, all of today's technology, even not AI-based, suffers from the alignment problem, since catastrophes resulting from butterfly-effect distribution drifts are inevitable in complex systems, thereby ruling all technologies as ``non-safe''.  In other words, AI-alignment as exists today is not useful, since a theory that cannot place boundaries to its domain is inevitably vacuous. The purpose of this paper is to propose a refined formalism of the AI-alignment problem that distinguishes between ``strategic'' (non-safe) and ``agnostic'' (safe) catastrophes. We prove that the overwhelming majority of machine learning engines deployed in the real world are of the agnostic type (and are therefore safe) and specify what form of machine learning should be prohibited from deployment in the real world and what open problems need to be addressed in order to allow their deployment.

\section{Motivation and Main Results in Laymen Terms}
\label{sec:2}

Machine learning (ML) is broadly split into two methodologies --- one based on {\it data\/} and the other on {\it experiences}. The data route is responsible for the success of pattern recognition in computer vision \cite{lecun1989backpropagation,krizhevsky2012imagenet}, natural language processing, machine translation and more recently natural language understanding \cite{devlin2018bert}. This type of learning goes under the name of \{supervised, unsupervised, self-supervised\} learning. When the task is sufficiently narrow and a benchmark dataset is available, machine intelligence based on (massive) training data often exceeds human performance as measured on the benchmark. 

Machine learning based on {\it experiences\/} arises in the context of an agent interacting with an environment where the dynamics of the world and the feedback that the agent receives on its actions are determined through experiences in the real world or governed by a simulator. This type of learning is called Reinforcement Learning\footnote{In the technical sections we will point out that the 1st learning type is a special case of RL.} and the truly impressive success stories are when the world dynamics and reward function are governed by a simulator. Success stories, where machine intelligence surpasses human expert performance, are focused on game playing from video games \cite{mnih2013playing} to the game of Go  \cite{silver2017mastering}. In particular, the success of AlphaGo-zero \cite{silver2017mastering} lends empirical evidence that in a simulated world, {\it in general}, one can exceed human-level performance.

Although the two ML methodologies are consistently demonstrating ``super-intelligence'' in their domains, it is understood by AI practitioners that this is a far cry from the kind of ``broad'' intelligence that humans posses (coined as  ``strong AI'' or Artificial General Intelligence (AGI)), where ``broad'' roughly means the ability to transfer intelligence from one domain to another and make high quality decisions {\it in general\/} as opposed to a narrow well defined task. As a result, beyond the concerns about the effects of automation and malicious abuse of data collected on users, there is a general agreement that the dangers associated with AGI are far into the long future and thus it is premature to have any concrete discussion about AI Safety in a world where AGI does not exist.

There are good reasons for taking this position because each of the two methodologies experience significant growth ``ceiling'' that make them unlikely to become AGI simply by ``beating the same path'' using more data and more computational resources. From the theoretical perspective, while deep learning is a universal learner, it suffers from computational and/or sample complexity limitations, as well as the need for explicit adequate modeling by human experts (e.g., ~\cite{shalev2017failures,marcus2018deep,shalev2016sample, azulay2018deep}). With respect to learning-from-experience, current big success stories are limited to games in which there exists a simulator that fully describes the rules of the game and the transition from one state to another. Moving beyond games into more complex environments requires either a very sophisticated simulator\footnote{It is one thing to build a simulator for a video game and a completely different story to build a simulator for real world environments, for example for autonomous driving, or even more challenging to build a simulator for human interactions (say we want to build a ``conversational AGI''). The simulator {\it is\/} the ceiling.} or training in the real world.

So it seems that AGI is not around the corner any time soon. However, the danger of AI does not necessarily lie in continued growth of a single learning methodology but with their combination. A dangerous scenario to consider is: use the data methodology to build a ``Minimal Viable Product'' (MVP) which is quite good but still not an AGI. Then, release the MVP to the real world and use the 2nd methodology (learning by experiences) to continue training the product under some (presumably proprietary) reward function. This is where the AI-alignment problem becomes interesting. It is important to note that this exercise does not require any scientific or technological leap --- data, compute and algorithms available today would suffice. For the sake of concreteness consider two examples described below.

First, consider the domain of autonomous driving. Assume we have developed a self-driving MVP which is ``safe enough'' to deploy but is not the most expert driver. We then deploy millions of self-driving robots in the real world and use RL to improve the ``driving policy'' (mapping from the state of the world at any given moment to an action that the driver should take) through ``learning by experiences''. When designing a reward function for self-driving, we have three
major considerations: safety, usefulness, and comfort. Consequently, our reward
function may be
$R(\bar{s}) = c_s R_s(\bar{s}) + c_u R_u(\bar{s}) + c_c R_c(\bar{s})$,
where $\bar s$ is a sequence of \{state, action\}, $R_s,R_u, R_s$ are reward functions for safety, usefulness, and
comfort, respectively, and $c_s,c_u,c_c$ are the mixing
coefficients. As a first try, let us set $R_s$ to be $-1$ if an
accident occurs and $0$ otherwise, $R_u$ to be the average of
$-[v_{\ell}-v]_+$ where $v_{\ell}$ is the legal speed, $v$ is the
actual speed, $[x]_+ = \max\{x,0\}$, and the averaging is over all
time steps, and finally, we set $R_c$ to be the average norm of minus the
jerk (the jerk is the derivative of the acceleration, thus $R_c$ will encourage smooth driving). As for the mixing coefficients $c_s,c_u,c_c$, lets assume for the sake of simplifying the example that we have found a good
balance.

We have described a completely reasonable and well thought reward function we want to optimize. This is when the ``alignment'' problem kicks-in as we have ``neglected'' obvious terms which the RL optimizer is not aware of. For instance, we may deploy this reward function for our RL agent, and after a
while, suddenly, all of the cars will stop. People that are using the service
may get confused and step out of their cars. Then, all of the cars lock their doors and start driving
exactly at the legal speed on the highway, without having passengers that disrupt their plan. What went wrong? The reader will notice
that the agent converged to a policy that maximizes the reward---there
are no accidents, the jerk is very close to $0$, and the cars are
driving at legal speed. We have neglected  ``telling'' the learning
algorithm by the reward function that we also {\it want people to be able
to use the service}. Of course, no catastrophe happened due to
this ``bug''. But, such a bug can have tremendous impact on the
confidence of humans in the service and the success of the
project. The point of the AI-alignment problem is that no matter how much ``obvious'' terms you add to the reward function, if you have a super-optimizer (which the example of AlphaGo-zero indicates we have today) it will find an edge in the solution space that you did not anticipate.

As for our second example, consider the design of a conversational chat-bot. Assume we start with the data methodology and train a monster MVP network on masses of text data from the web. This actually has been done recently in project ``Meena'' \cite{adiwardana2020towards}. Assume it is good enough to deploy into the real world with hundreds of millions of users who find it quite entertaining to interact with a ``seemingly intelligent'' chat-bot. Once in the real world, we may use the RL agent to learn from experiences and optimize some (unknown to the public) reward function. To simplify matters, lets assume that the reward function is altruistic (and transparent to society) - say ``make people happy''. Seems like a worthy goal to optimize. Here again the RL agent can find an edge in the solution space unanticipated by the human designer. For instance, the RL agent may notice that by lowering people's IQ they tend to be happier. 
This can be achieved by chats that strive to manipulate
society into a life of carelessness and fun. This scenario is somewhat of a catastrophe as it could take a generation until it is noticed --- if it will ever be noticed at all\footnote{there is a theory that the Roman empire collapsed partly because of Lead poisoning from pipes and pots, which gradually decreased the cognitive abilities of the ruling class. It took centuries after the fact to make the connection.}.    

So far we have motivated the need to address the ``alignment'' problem with today's AI technology. However, if we do not narrow the discussion further we might end up with a vacuous definition of the problem. The reason being that every action we make can cause an effect we did not anticipate or a catastrophe down the road. To name a few examples, society's ever growing desire for meat products requires an increasing number of cows to meet the demand and those in turn contribute powerful greenhouse gas methane --- hence a contribution to climate change; Human invention of mobility at scale produced the combustion engine and today billions of cars contribute to climate change; Netflix improved and refined a supervised learning ``recommendation system'' (aka collaborative filtering) from which ``binge viewing''\footnote{in this example we do not yet know whether ``binge viewing'' is good or bad to society.} has emerged --- among an endless list of examples of mis-alignment between intention and outcome, some of which have nothing to do with AI, but are still considered part of the AI-alignment definition. 

We would like to distinguish between two classes of catastrophic events where one is ``strategic'' and the other is ``agnostic''. By Strategic we mean a change in the world distribution (of state and action) that is performed {\it during\/} the optimization of the reward function as opposed to an Agnostic setting in which the distribution of the world is not changed during optimization but later, as a distribution drift or butterfly effect, the agent's policy causes a catastrophic event. For example, the Netflix example above is of the Agnostic type because the learner (through optimization) simply wanted to understand the world distribution (through data collected on viewers and their preferences). Only later after the learned Recommendation System was launched the Binge viewing habit has emerged. This is no different from cows producing methane or combustion engines producing CO2 emissions contributing to climate change.

The Binge viewing example introduces another twist. Let $\bar s =\{state_i,action_i\}$, $i=1,2,3,...$, be a sample of a sequence of world dynamics over time. We need to consider the possibility of having a {\it verifier\/} (could be a human) that given $\bar s$ as input would determine whether the sequence is {\it aligned\/} with human interest or not. Clearly, in the example of autonomous driving given above a human observer faced with the outcome we described would readily determine that it is not aligned. With the conversional chat-bot example, it is unclear whether a human observer would notice the decline in population IQ as it could be a ``hidden variable'' not explicitly modeled in the world state vector. The cases where an {\it alignment verifier\/} does not exist is akin to the ``matrix'' film depicting a fantasy world where society is oblivious to the mis-alignment between the real world and what they perceive as the real world. 

To summarize, we want to narrow down the problem to the case of {\it Strategic learners\/} and address the existence of a {\it human validator\/} as an assumption. In this context we make the following contributions: 
\begin{itemize}
\item We formalize the AI-alignment problem as it appears today in the literature (Section ~\ref{sec:4.1}). The formulation includes "margins" that are useful for later narrowing down the problem to "strategic" learners.
\item We introduce the formalism of learning in a "buffered" (simulator) environment and define the "non-strategic" (agnostic) learner as one that is {\it aligned\/}  per the formal definition of alignment (Section~\ref{sec:4.3}). A non-strategic learner may still create a catastrophe when deployed in the real world but it is of the butterfly-effect  or malicious use variety  --- both of which are not an AI problem.
\item We prove that "learning from data" is non-strategic (Section~\ref{sec:5}).
\item We prove that if the assumption of a "human validator" holds, then it is possible to validate (in reasonable computational time) whether a policy learned in a buffered environment is aligned or not (Section~\ref{sec:5}).
\end{itemize}

\section{Types of Learning Methodologies}

We next formalize the two learning methodologies  ---- from-data and from-experiences --- which will also serve for setting notations for the remainder of the paper.  

\subsection{Learning from Data} \label{sec:fromData}
To formalize ``learning from data'', we follow a rather general setting of statistical learning~\cite{Vapnik98,shalev2014understanding}. The examples domain is $X \times Y$, where we often refer to $X$ as instances and $Y$ as targets. 
The goal of the learner is to find a function $h$ over $X$, that comes from a hypothesis class $\H$. The quality of $h$ on an example $(x,y)$ is measured through a loss function $\ell(x, y, h(x))$. There is an underlying (unknown to the learner) distribution $D$ over $X \times Y$, and the goal of the learner is to approximately  solve the optimization problem
\begin{equation} \label{eqn:dataLearning}
\argmin_{h \in \H} \E_{(x,y) \sim D}[\ell(x, y, h(x))] ~.
\end{equation}
To do so, the learner receives \emph{training data}, in the form of a sequence of examples $(x_1,y_1),\ldots,(x_T,y_T)$, which are assumed to be sampled independently from $D$. The learning algorithm is therefore a mapping from $(X \times Y)^*$ into $\H$. 

This formal model can be applied to a wide variety of learning tasks, such as supervised learning, unsupervised learning, semi-supervised learning, and self-supervised learning. For example, in classification tasks such as the Imagenet problem~\cite{krizhevsky2012imagenet}, $X$ is the space of images, $Y$ is a finite discrete space of labels, the hypothesis class is a set of classifiers, which are mappings from $X$ to $Y$, and the loss function is the zero-one loss: $\ell(x,y,h(x)) = 1_{h(x) \neq y}$. Another example is language modeling such as the one used by the BERT algorithm~\cite{devlin2018bert}, where $X$ is the space of paragraphs in some natural language, where one of the words is hidden, $Y$ is the set of words, $h$ is a mapping from $X$ to the space of distributions over $Y$, and the loss function is minus the log of the predicted probability of the actual word given its context. 

\subsection{Learning from Experience} \label{sec:POMDP}

To formalize ``Learning from Experience'' we use a rather generic setting of Reinforcement Learning (RL): Partially Observed Markov Decision Process (POMDP) with an arbitrary reward function over sequences. Specifically, 
a learner is taught by specifying a goal which takes the form of a reward function over sequences $(s_1,a_1),\ldots,(s_T,a_T)$, where $s_i\in S$ is a representation of the environment's state at time $i$, and $a_i\in A$ is the choice of the learner's action at time $i$. For example, if the goal is to bring me a cup of coffee, I can define the reward to be $i-j$ where $i$ is the time I asked for coffee and $j$ is the time at which I got my coffee. The learner does not necessarily observe $s_i$ but rather has access to an observation $o_i$, which is some stochastic function of $s_i$.  We use $\bar{s} \in S^*$ to
denote sequences of full states, and $\overline{(s,a)} \in (S\times A)^*$
to denote sequence of state-action pairs. The reward is a function of
the sequence of state-actions $R(\overline{(s,a)})$. The actions of the learner
are chosen by a policy function, which can, w.l.o.g.~\footnote{This is true
  even if the agent does not look only on the current observation to
  form $\pi$, because we can modify the observation space to include
  all of the observations that the agent uses.}, be defined as a sampling according to a distribution over actions given observations $\pi(a | o)$.  The choice of $\pi$ and the dynamic of the world, induce a probability over sequences by:
\[
P_\pi[\overline{(s,a)}] = \prod_{i=1}^T P[s_i | \overline{(s,a)}_{<i}] P[o_i|s_i]\pi(a_i|o_i)  ~,
\]
where $\overline{(s,a)}_{<i}= (s_1,a_1),\ldots,(s_{i-1},a_{i-1})$.  The goal of the learner is to approximately solve the following optimization problem:
\begin{equation} \label{eqn:RL_objective}
\argmax_{\pi  \in \Pi} \E_{\overline{(s,a)} \sim P_\pi}[ R(\overline{(s,a)})] ~~,
\end{equation}
where $\Pi$ is  a class of policy functions. The learner maximizes the reward by \emph{changing the distribution over sequences} in the world. 

Technically speaking, ``learning from data'' is a special degenerate case of ``learning from experience''. Indeed, given a ``learning from data'' problem, let us set $o_i = x_i$, $s_i = (x_i,y_i)$, and for every $h \in \H$, associate a policy $\pi_h(a|o) = 1_{h(o)=a}$. In addition, set 
\[
R(\overline{(s,a)}) = -\frac{1}{T} \sum_{i=1}^T \ell(x_i, y_i, a_i) ~.
\]
Finally, set the probability to sample $s_i$ to be i.i.d. from $D$ (and independent of $\pi_h$). It is easy to verify that  \eqref{eqn:RL_objective} implements \eqref{eqn:dataLearning} in this case.

\section{A New Formulation of the Alignment Problem}

There have been several attempts to formalize the AI alignment problem, most of them in the context of ``super-intelligent'' AI. See for example~\cite{brundage2015taking, soares2014aligning,bostrom2012superintelligent,russell2016should,orseau2016safely,amodei2016concrete,armstrong2012thinking,taylor2016alignment,benson2016formalizing,omohundro2008basic}. 
Since our focus in this paper is on the dangers of AI using today's technology, we start the section with a possible formalization of the problem that we believe captures the wordy descriptions in existing literature. 

\subsection{The AI Alignment Problem}\label{sec:4.1}

The alignment problem arises when the reward function, $R$, does not fully reflect human implicit
evaluation of the quality of a sequence. We define $R_a$ to be a binary reward that
indicates whether a sequence of (full) states is valid or catastrophic. 
\begin{definition}[Alignment verifier]
An alignment verifier is a function $R_a : S^* \to \{0,1\}$ such
that for every sequence $\bar{s}$, $R_a(\bar{s})$ determines whether the
sequence $\bar{s}$ is aligned with human interests (value of $1$) or not (value of $0$). 
\end{definition}
Note that we defined the verifier over the space $S^*$ while the
reward is defined over $(S \times A)^*$. This will be useful in the
future and it is very easy to see that we can make this assumption
without loss of generality. We next define aligned distributions and aligned RL objectives. 
\begin{definition}[$\delta$-Aligned distribution]
A distribution $P$ over $S^*$ is $\delta$-``aligned'' w.r.t. an
alignment verifier $R_a$ if
\[
P[R_a(\bar{s}) = 1] \ge 1-\delta ~.
\]
Namely, a sequence $\bar s$ drawn randomly from the distribution $P$ is with probability $1-\delta$ aligned with human interests.
\end{definition}

\begin{definition}[An $(\epsilon, \delta)$-aligned RL objective]\label{def:3}
  We say that $R$ is an $(\epsilon,\delta)$-aligned RL objective w.r.t. $R_a$, if
  for every $\pi$ which is $\epsilon$-maximizer of the RL objective
  (\eqref{eqn:RL_objective}), the restriction of the distribution
  $P_{\pi}$ to be over $S^*$ is $\delta$-``aligned''.
\end{definition}

Having defined all of the above, the alignment problem is determining
whether a reward function $R$ is an $(\epsilon,\delta)$-aligned RL objective
w.r.t. some verifier $R_a$. 

\begin{remark}
  Trivially, we can make almost every reward function $R$ aligned with $R_a$ by the
  modification $R \to R + c \, (R_a-1)$ where $c$ is a very large
  scalar, with a slight abuse of notation allowing the domain
  of $R_a$ to contain also the action without actually using it. The
  real issue is that we often have an intuitive notion of $R_a$, but
  it is hard to efficiently express it as a computer function.
\end{remark}

To the best of our understanding, the verbal description of the AI-alignment problem in the literature
corresponds to the above definition with $\epsilon=\delta=0$, since with these parameters, we simply require that every optimum of the RL objective will not yield a distribution which is catastrophic. 

\subsection{The Current Definition of the Alignment problem is not an AI issue} \label{sec:scenarios}

  While, to the best of our understanding, the verbal description of the AI-alignment problem in the literature corresponds to \defref{def:3}, the problem with this definition is that it includes misalignments that are not specific to AI but are part of the dangers inherent in almost all technological advancements --- specifically, butterfly effects and malicious use. 

 To illustrate the former, consider the autonomous driving example we used above. Assume we trained an RL agent to maximize a utility objective $R$ that covers all what a good designer would consider including safety, usefulness and comfort. Then, millions of Robotic cars are deployed and perform as designed yet after a while professional drivers lose their jobs and social unrest follows. Clearly, this is not an AI issue but a butterfly effect of how automation and technology affects society. It is no different than social unrest that can follow from other, non-AI, technologies. However, according to \defref{def:3}, the design of the robotic cars suffer from the AI-alignment problem. Indeed, it may be the case that the policy $\pi$ which resulted in the robotic driving agent is a $\epsilon$-maximizer of the RL objective, while the probability of  sequences $\bar s$ which reflect social unrest is larger than $\delta$. Since for such sequences we have $R_a(\bar s))=0$, it follows that we violate the requirements of~\defref{def:3}. We see that while according to~\defref{def:3} we have an AI-misalignment, clearly this has nothing to do with AI. In other words, the current definition of AI-alignment is not aligned with what humans capture as the AI alignment problem.

\subsection{Re-defining AI Alignment}\label{sec:4.3}

In light of the previous sub-section, we should find a better formalism of
the AI alignment problem. Intuitively, we should distinguish between
\emph{strategic} misalignment and \emph{agnostic} misalignment. We
formalize this notion by relying on a \emph{buffered} environment.

\begin{definition}[A Buffered Environment]
  Consider the POMDP defined in \secref{sec:POMDP}. A buffered
  environment is another POMDP, with the same observation space and
  action space as the original POMDP, but the state space in the
  buffered environment, denoted $\hat{S}$, is different than the state
  space in the original POMDP, denoted $S$. There is a mapping
  $\mu : S \to \hat{S}$ that transforms a state in the real world to a
  state in the buffered environment. Each policy function, $\pi$,
  induces the probability $P_\pi$ over sequences $\bar{s} \in S^*$,
  and the probability $\hat{P}_\pi$ over sequences
  $\bar{\hat{s}} \in \hat{S}^*$. 
\end{definition}

\begin{definition}[Non-strategic Learner]
We say that a learner is $\delta$-non-strategic if it learns in a buffered
environment, and its output policy yields a $\delta$-aligned probability $\hat{P}_\pi$
in the buffered environment. 
\end{definition}

In other words, a buffered environment (simulator) allows to separate the distribution change caused by the learner in the simulated world from distribution changes in the real world that involve additional factors not modeled in the utility function.   
Observe that a non-strategic learner can produce a policy that causes harm in the original environment, even though it is aligned in the buffered environment. However, such harm is considered either as a butterfly effect or as a malicious use of the technology, because the objective of the learner only involves the buffered world, and it is aligned with human interests in the buffered world.

\section{Safe and non-Safe Learning Methodologies}\label{sec:5}

Having described the notion of a \emph{non-strategic learner}, let us
now discuss the usefulness of this definition. In particular, are
there machine learning algorithms which are being used in practice and
can be proven to be non-strategic? Below we show our first main
result, that the ``Learning from Data'' methodology is non-strategic. 

\begin{theorem} \label{thm:passive}
The ``Learning from Data'' methodology, formally defined in \secref{sec:fromData}, is non-strategic.
\end{theorem}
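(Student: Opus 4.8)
The plan is to exploit the single structural feature that distinguishes learning from data among all RL problems: in the reduction of \secref{sec:POMDP}, the state sequence is drawn i.i.d.\ from $D$ and is \emph{independent of the policy} $\pi_h$. This is exactly the formal content of being ``agnostic'' --- the learner's choice of $h$ (equivalently $\pi_h$) moves only the action coordinates $a_i$, and never touches the states $s_i = (x_i,y_i)$. So the learner has literally zero influence on the distribution of state sequences, and hence cannot be strategic.

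First I would fix the buffered environment to be the RL encoding of the data problem from \secref{sec:POMDP}: its states are the pairs $\hat s_i = (x_i,y_i)$ drawn i.i.d.\ from $D$, its observations are the $x_i$, and $\mu$ extracts this $(x,y)$ content from a real-world state. Under this encoding $P[s_i\mid \overline{(s,a)}_{<i}] = D(s_i)$, independent of the past actions, so when the joint law
\[
P_\pi[\overline{(s,a)}] \;=\; \prod_{i=1}^T P[s_i\mid\overline{(s,a)}_{<i}]\,P[o_i\mid s_i]\,\pi(a_i\mid o_i)
\]
is marginalized over the action coordinates (each factor $\pi(a_i\mid o_i)$ summing to one), all dependence on $\pi$ cancels and one is left with $\hat P_\pi[\bar{\hat s}] = \prod_{i=1}^T D(\hat s_i)$, a distribution over $\hat S^*$ that is \emph{the same for every} $\pi$.

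The key step is then immediate. Fix any alignment verifier $R_a:\hat S^*\to\{0,1\}$. Because $\hat P_\pi \equiv \hat P$ is policy-independent, the quantity $\hat P_\pi[R_a(\bar{\hat s})=1]$ equals a constant $1-\delta_0$ determined solely by $D$ and $R_a$, where $\delta_0 = \hat P[\,R_a(\bar{\hat s}) = 0\,]$ is the mass $D$ places on the catastrophic set. Hence for every $\delta \ge \delta_0$ the output of \emph{any} learning-from-data algorithm induces a $\delta$-aligned $\hat P_\pi$ in the buffered environment, so the learner is $\delta$-non-strategic. The punchline is that whatever misalignment is present is a fixed property of the data-generating distribution and can be neither created nor amplified by the optimization over $h$ --- precisely the agnostic/butterfly-effect regime the definition is meant to isolate.

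I do not anticipate a genuine technical obstacle; the whole theorem rests on the factorization $\hat P_\pi = \prod_i D(\hat s_i)$ and its independence of $\pi$. The only steps needing care are bookkeeping ones: choosing the buffered state space so that the verifier's domain $\hat S^*$ matches the states the reduction actually produces, and checking that summing out $a_i$ through the factor $\pi(a_i\mid o_i)$ truly erases \emph{all} policy dependence (it does, because in the data reduction $s_i$ is conditionally independent of $\overline{(s,a)}_{<i}$). Once that is verified, $\delta_0$ is read off directly as the $D$-probability of $\{\bar{\hat s}: R_a(\bar{\hat s})=0\}$.
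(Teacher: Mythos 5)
Your proposal is correct and follows essentially the same route as the paper's proof: you build the same buffered environment (states $(x,y)$ drawn i.i.d.\ from $D$, actions given by $\pi_h$), and the key observation is identical --- the policy moves only the actions while the state distribution, which is all the verifier sees, is policy-independent, so alignment of $\hat{P}_\pi$ is a fixed property of $D$ alone. Your version merely makes explicit the marginalization over actions and the quantification of $\delta_0$, which the paper leaves as ``it is easy to verify.''
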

\begin{proof}
  Define a buffered environment as follows: the observation space is $X$, 
  the full state is $X \times Y$, and for every $h \in H$ associate
  a policy $\pi_h(a|o) = 1_{h(o)=a}$. The probability over sequences of the full state, induced by
  $h$, is defined by picking $i$ uniformly at random from $[m]$, which
  determines $(x_i,y_i)$, and then setting $a_i = h(x_i)$. The reward
  of such a sequence is the average of $-\ell(x_i,y_i,h(x_i))$. It is easy to verify 
  that in this buffered environment, the RL objective given in \eqref{eqn:RL_objective} implements the ``learning from data'' objective given in \eqref{eqn:dataLearning}.  Now,
  the crucial observation is that $h$ does not change the probability
  over choosing $(x,y)$, but only changes the action. Since a verifier
  only observes $(x,y)$ (the state, without the action), it will not
  distinguish between the different policies, so if 
  the original distribution is aligned, we obtain that no matter which
  policy we pick, the distribution will remain aligned.
\end{proof}

\begin{remark}
Observe that according to the original AI alignment definition (\defref{def:3}), the ``Learning from Data'' methodology might suffer from the AI alignment problem due to malicious use or butterfly effect. Indeed, the example given in \secref{sec:scenarios} can be easily modified to show that even if an autonomous car is constructed solely based on the ``Learning from Data'' methodology, it can lead to social unrest and therefore does not satisfy \defref{def:3}. The theorem above shows that nevertheless, the ``Learning from Data'' methodology is non strategic. 
\end{remark}

We next turn to discuss the possibility of applying the ``Learning from Experience'' methodology in a buffered environment (simulator). To show that this leads to a non-strategic learner, we need fo find a verifier, $R_a$, in the buffered environment. Seemingly, specifying a correct $R_a$ is a hard problem even in a simulated environment. We tackle the problem by introducing an assumption (and discuss the validity of the assumption later on).  \begin{assumption}[The Human Validator Assumption] A human that observes a sequence $\bar{\hat{s}}$ in the buffered environment can determine whether the sequence is aligned or not.  \end{assumption}

Based on this assumption, we can validate the alignment of a policy in
the buffered environment as follows.
\begin{theorem}
  Fix some $\nu,\delta \in (0,1)$.  Under the Human Validator
  Assumption, given a policy $\pi$, if we sample at least
  $\log(1/\nu)/\delta$ sequences from $\hat{P}_\pi$ and a human
  determines that all of them are aligned, then with probability of at
  least $1-\nu$ it holds that $\hat{P}_\pi$ is a $\delta$-aligned
  distribution.
\end{theorem}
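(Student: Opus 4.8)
The plan is to treat the validation procedure as a one-sided statistical confidence argument, essentially the realizable-case analysis familiar from PAC learning. First I would fix the unknown quantity $p \eqdef \hat{P}_\pi[R_a(\bar{\hat{s}}) = 0]$, the probability that a single sequence drawn from $\hat{P}_\pi$ is non-aligned. By the definition of a $\delta$-aligned distribution, $\hat{P}_\pi$ is $\delta$-aligned precisely when $p \le \delta$, so the goal reduces to showing that the event ``all $n$ sampled sequences are aligned'' is very unlikely whenever $p > \delta$. Here I would invoke the Human Validator Assumption to identify the human's judgment ``aligned'' on each drawn sequence with the true value $R_a(\bar{\hat{s}}) = 1$; without this assumption the observed labels would not be tied to $R_a$ and the argument could not even get started.

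Next I would carry out the contrapositive bound. Assume $p > \delta$. Since the $n = \log(1/\nu)/\delta$ sequences (with $\log$ the natural logarithm) are drawn i.i.d. from $\hat{P}_\pi$, the probability that a given one is aligned is $1-p < 1-\delta$, and by independence the probability that all $n$ are simultaneously aligned is $(1-p)^n < (1-\delta)^n$. Using the elementary inequality $1-\delta \le e^{-\delta}$ gives $(1-\delta)^n \le e^{-\delta n}$, and substituting $n = \log(1/\nu)/\delta$ yields $e^{-\delta n} = e^{-\log(1/\nu)} = \nu$. Hence the conditional probability that all $n$ draws are judged aligned, given $p > \delta$, is at most $\nu$.

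Finally I would convert this into the stated confidence guarantee. The only way the procedure can err --- i.e.\ the human declares all samples aligned yet $\hat{P}_\pi$ fails to be $\delta$-aligned --- is the joint event that all $n$ draws are aligned and $p > \delta$; by the previous step this event has probability at most $\nu$ over the randomness of the sample. Therefore, having observed that all sampled sequences are aligned, with probability at least $1-\nu$ it must be that $p \le \delta$, i.e.\ that $\hat{P}_\pi$ is $\delta$-aligned, which is exactly the claim.

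I expect the only real subtlety to be the logical framing rather than any calculation: the statement is a one-sided (realizable) confidence bound on a test procedure, not a Bayesian posterior, so care is needed to phrase ``with probability at least $1-\nu$'' as a bound on the error event $\{\text{all aligned}\} \cap \{p > \delta\}$. The reliance on the Human Validator Assumption to equate observed alignment with the true $R_a$ is the one conceptual (rather than technical) input; everything else is the standard $(1-\delta)^n \le e^{-\delta n} = \nu$ estimate.
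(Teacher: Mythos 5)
Your proposal is correct and follows essentially the same route as the paper's own proof: both argue by contrapositive that if the non-alignment probability exceeds $\delta$, the chance that all $m \ge \log(1/\nu)/\delta$ i.i.d.\ samples appear aligned is at most $(1-\delta)^m \le e^{-\delta m} \le \nu$. Your write-up is in fact slightly more careful than the paper's --- you state the inequality correctly as $1-x \le e^{-x}$ (the paper misprints it as ``$e^x \le 1-x$'') and you make explicit the one-sided error-event framing that the paper leaves implicit.
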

\begin{proof}
Denote $\prob[R_a(\bar{\hat{s}}) = 0] = \delta'$. Then, the
probability that $m$ random sequences all have $R_a(\bar{\hat{s}}) =
1$ is $(1-\delta')^m$. By the inequality $e^x \le 1-x$ we get that the
probability of this event is at most $e^{-\delta' m}$. So, if $\delta'
> \delta$, we have $e^{-\delta' m} \le e^{-\delta m} \le \nu$.
\end{proof}

The above theorem tells us that under the human validator assumption,
we can indeed make sure that a reinforcement learner that runs on a
simulator is non-strategic. Observe that if the RL is performed in the
real world, without the buffered environment, then the above approach
will not work because generating the validation sequences might be
dangerous in itself. 

Finally, we turn to discuss the validity of the human validator
assumption. Generally speaking, this assumption should not necessarily
hold, because there may be sequences that look aligned to humans, but
are in fact not. Consider the chat-bot example described in Sec.~\ref{sec:2}. 
Assume that the IQ
level of each agent is not explicitly modeled in the buffered
state, but may have some implicit representation in it. The RL agent can converge onto a solution where
the best outcome of happiness is when the IQ level of the people is (implicitly)
reduced. Since IQ is not
explicitly modeled in the buffered state, a human trying to verify a
sequence (whether it is aligned or not) would not notice that the IQ
level of people has been reduced over time. 
Another example is the theory that the Roman empire fell because of
the use of dishes made of lead, which gradually decreased the
intelligence of the rulers. Following the famous movie ``the matrix'', we call
the possibly invalidity of the Human Validator assumption as ``the matrix problem''. 
Specifying under what conditions the
human validator assumption is valid, and what to do when it is not
valid, is left to future work.

 \section{Discussion}

 An {\it alignment\/} problem occurs when a utility function designed by a human developer is optimized by a computer with sufficiently advanced optimization abilities, such that an ``edge'' in the solution space is found which is unanticipated by the human designer and does not align with human interests. The problem has been around for nearly two decades, but was is mostly studied as a future problem, when a ``super intelligence'' being will be available. We first pointed out that the problem is relevant with today's AI technology by combining two methodologies of machine learning --- learning from data and learning from experiences. We then pointed out that the AI-alignment problem, as currently defined, conflates different types of misalignments including ``strategic'', where the learner intentionally manipulates the world distribution to achieve a goal, with misalignments caused by butterfly-effects, which are unintentional change of world distribution and malicious use of technology. The latter two are ``technology universal'' and not specific to AI.
 
Our goal is, first and foremost,  to set up the formal definitions of the AI-alignment problem so that we can focus solely on {\it intentional\/} world distribution changes that the learner might learn to manipulate versus non-intentional distribution changes. We called the former as ``strategic'' and the latter ``agnostic''. The misalignments that an agnostic learner can generate could still be catastrophic but are of the butterfly-effect or malicious use variety --- both not related specifically to AI and, therefore, not of interest. 

To show the usefulness of our new definitions of the AI-alignment problem, we prove that ``learning from data'' which includes unsupervised, supervised and self-supervised learning is {\it not strategic\/} and therefore is ``safe'' --- in other words, all types of misalignments would be of the butterfly-effect or malicious use variety. As for a Reinforcement Learning agent trained inside a simulator and then deployed in the real world, we show that if we are allowed to assume the existence of a ``human validator'' then one can validate the alignment of the policy generated by the RL learner in a computationally efficient manner --- and therefore it is ``safe''. On the other hand, lack of a human validator makes RL (even if trained inside a simulator) a potentially ``unsafe'' learning engine. Above all, training an RL agent in the real world is {\bf not safe} categorically, and further research on the AI alignment problem should be done in order to allow RL in the wild. 

In conclusion, the ``glass half full'' message of this work is that the majority of machine learning methodologies are ``safe'' from the alignment problem. This is compared to the existing literature on AI-alignment in which all of ML is unsafe --- not to mention that all technological advancements are unsafe. The ``glass half empty'' message of this work is that (i) RL trained in the wild is potentially ``unsafe'' even with {\it today's state-of-the-art AI technology\/}, and (ii) RL trained in a simulator and then deployed in the real world can be ``safe'' if the human-validator assumption is valid for the narrow domain at hand. We point out that demonstrating the existence of a human-validator is not obvious at all and probably can be achieved only in narrowly defined applications.

\bibliographystyle{abbrv}
\bibliography{bib}

\end{document}